\documentclass{article} 

\usepackage{colortbl,booktabs}
\definecolor{tabgrey}{rgb}{0.8,0.8,0.8}

\usepackage{amsmath}
\usepackage{amssymb}
\usepackage{amsthm}
\usepackage{natbib}


\newcommand{\D}{\nabla}
\newcommand{\R}{\mathbb{R}}

\newcommand{\regret}{\text{Regret}}

\newcommand{\inner}[1]{\left\langle#1\right\rangle}
\newcommand{\E}[2]{\mathbb{E}_{#1}\left[#2\right]}
\newcommand{\Lag}{\mathcal{L}}
\newcommand{\ones}{\mathbf{1}}

\newcommand{\ri}{\operatorname{ri}}
\newcommand{\dom}{\operatorname{dom}}

\newcommand{\DT}{\Delta_\Theta}
\newcommand{\DX}{\Delta_X}
\newcommand{\losst}{\lambda}
\newcommand{\lossx}{\ell}
\newcommand{\tlossx}{L}
\newcommand{\assess}{\alpha}

\DeclareMathOperator*{\argmin}{\arg\min}


\newcommand{\ie}{\textit{i.e.}}

\newtheorem{definition}{Definition}

\newtheorem{theorem}{Theorem}
\newtheorem{lemma}{Lemma}

\title{Generalised Mixability, Constant Regret, and Bayesian Updating}
\date{February 8th, 2014}
\author{%
	Mark D. Reid\\ 
	The Australian National University \& NICTA
\and
	Rafael M. Frongillo\\
	Microsoft Research
\and
	Robert C. Williamson\\
	The Australian National University \& NICTA
}

\begin{document}

\maketitle

\begin{abstract}
	Mixability of a loss is known to characterise when constant regret bounds
	are achievable in games of prediction with expert advice through the use of
	the aggregating algorithm~\citep{Vovk:2001}.
	We provide a new interpretation of mixability via convex analysis that
	highlights the role of the Kullback-Leibler divergence in its definition.
	This naturally generalises to what we call $\Phi$-mixability
	where the Bregman divergence $D_\Phi$ replaces the KL divergence.
	We prove that losses that are $\Phi$-mixable also enjoy constant regret
	bounds via a generalised aggregating algorithm that is similar to mirror
	descent.
\end{abstract}

\section{Introduction}

The combination or aggregation of predictions is central to machine
learning. 
Traditional Bayesian updating can be viewed as a particular way
of aggregating information that takes account of prior information.
Notions of ``mixability'' which play a central role in the setting of
prediction with expert advice offer a more general way to aggregate, and
which take account of the loss function used to evaluate predictions (how
well they fit the data). 
As shown by \citet{Vovk:2001}, his more general ``aggregating algorithm'' 
reduces to
Bayesian updating when log loss is used. 
However, as we will show there is another design variable that to date has
not been fully exploited.  
The aggregating algorithm makes use of a distance between the current
distribution and a prior which serves as a regulariser. 
In particular the aggregating algorithm uses the KL-divergence. 
We consider the general setting of an arbitrary loss and an arbitrary
regulariser (in the form of a Bregman divergence) and show that we recover
the core technical result of traditional mixability: if a loss is mixable in
the generalised sense then there is a generalised aggregating algorithm which
can be guaranteed to have constant regret.

In symbols (more formally defined later), if we use $\lossx_x(p_\theta)$ to
denote the loss of the prediction $p_\theta$ by expert $\theta$ on observation 
$x$ and $D_\phi(\mu',\mu)$ is used to penalise the
``distance'' between the choice of updated distribution $\mu'$ from what it
was previously $\mu$ then we can recover both Bayesian updating and the
updates of the aggregating algorithm as minimisers of
$\E{\theta\sim\mu'}{\lossx_x(p_\theta)} + D_\Phi(\mu',\mu)$ via the
choices summarised in the table below.
\begin{center}
	\arrayrulecolor{tabgrey}
	\renewcommand{\arraystretch}{1}
	\begin{tabular}{lcc}
		\toprule[0.3mm]
		Scheme & Loss  & Regulariser \\
		\midrule[0.3mm]
		Bayesian  updating & log loss & KL divergence \\
		Aggregating algorithm & general mixable loss & KL
		divergence \\
		This paper &  general $\Phi$-mixable loss & general Bregman
		divergence $D_\Phi$\\
		\bottomrule[0.3mm]
	\end{tabular}
\end{center}
We show that there is a single notion of mixability that applies to all three 
of these cases and guarantees the corresponding updates can be used to 
achieve constant regret.

We stress that the idea of the more general regularisation and updates is
hardly new. See for example the discussion of potential based methods in
\citep{Cesa-Bianchi:2006} and other references later in the paper. The key
novelty is the generalised notion of mixability, the name of which is
justified by the key new technical result --- a constant regret bound
assuming the general mixability condition achieved via a generalised
algorithm which can be seen as intimately related to mirror descent.
Crucially, our result depends on some properties of the conjugates of potentials 
defined over probabilities that do not hold for potential functions defined over 
more general spaces.

\subsection{Prediction With Expert Advice and Mixability} 

A prediction with expert advice game is defined by its loss, a collection 
of experts that the player must compete against, and a fixed number of rounds.
Each round the expert reveals their predictions to the player and then
the player makes a prediction.
An observation is then revealed to the experts and the player and all
receive a penalty determined by the loss.
The aim of the player is to keep its total loss close to that of the best
expert once all the rounds have completed.
The difference between the total loss of the player and the total loss of
the best expert is called the regret and is the typically the focus of
the analysis of this style of game.
In particluar, we are interested in when the regret is \emph{constant},
that is, independent of the number of rounds played.

More formally, let $X$ denote a set of possible \emph{observations}.
We consider a version of the game where predictions made by the player
and the experts are all distributions over $X$.
The set of such distributions will be denoted $\DX$ and the probability 
(or density) $p \in \DX$ assigns to $x \in X$ will be denoted $p(x)$.
A \emph{loss} $\lossx : \DX \to \R^X$ assigns the penalty $\ell_x(p)$ to 
predicting $p \in \Delta_X$ when $x \in X$ is observed.
The set of experts is denoted $\Theta$ and in each round 
$t = 1, \ldots, T$, each expert $\theta\in\Theta$ makes a prediction
$p^t_\theta \in \DX$.
These are revealed to the player who makes a prediction $p^t\in\DX$.
Once observation $x^t\in X$ is revealed the experts receive loss
$\lossx_{x^t}(p^t_\theta)$ and the player receives loss 
$\lossx_{x^t}(p^t)$.
The aim of the player is to minimise its \emph{regret}
\(
	\regret^T := \tlossx^T - \min_\theta \tlossx_\theta^T
\)
where $\tlossx^T := \sum_{t=1}^T \lossx_{x^t}(q^t)$ and 
$\tlossx_\theta^T = \sum_{t=1}^T \lossx_{x^t}(p^t_\theta)$.


The algorithm  that witnesses the original mixability result is known 
as the \emph{aggregating algorithm} (AA) \citep{Vovk:2001}.
It works similarly to exponentiated gradient algorithms
\citep{Cesa-Bianchi:2006} in that it updates a 
\emph{mixture distribution}\footnote{%
	To keep track of the two spaces $X$ and $\Theta$ we adopt the convention
	of using Roman letters for distributions in $\DX$ and vectors in 
	$\R^X$ and Greek letters for distributions in $\DT$ and vectors in 
	$\R^\Theta$.
}
$\mu \in \DT$ over experts based on their performance at the end of each round.
The mixture is then used to ``blend'' the predictions of the experts in the
next round in such a way as to achieve low regret.
In the aggregating algorithm, the mixture is intially set to some 
``prior'' $\mu^0 = \pi \in \DT$.
After $t-1$ rounds where the observations were $x^1, \ldots, x^{t-1}$ and the
expert predictions where $p^1_\theta, \ldots, p^{t-1}_\theta$ for 
$\theta\in\Theta$ the mixture is set to
\begin{equation}\label{eq:aa-updates}
	\mu^{t-1}(\theta) 
	=
	\frac{%
		\exp(-\eta\tlossx^{t-1}_\theta)
	}{%
		\sum_{\theta\in\Theta} \exp(-\eta\tlossx^{t-1}_\theta)
	}.
\end{equation}
On round $t$, after seeing all the expert predictions $p^t_\theta$, the 
AA plays a $p^t \in \DX$ such that for all $x\in X$
\begin{equation}\label{eq:vovk-mixable}
	\lossx_{x}(p^t) 
	\le
	-\frac{1}{\eta} 
	\log \sum_{\theta\in\Theta} \exp(-\eta \lossx_x(p^t_\theta)) \mu^t(\theta).
\end{equation}
Mixability is precisely the condition on the loss $\lossx$ that guarantees
that such a prediction $p^t$ can always be found.
\begin{definition}\label{def:vovk-mixable}
	A loss $\lossx : \DX \to \R$ is said to be $\eta$-\emph{mixable} 
	for $\eta > 0$ if for 
	all mixtures $\mu^t \in \DT$ and all predictions 
	$\{ p^t_\theta \}_{\theta\in\Theta}$ there exists a $p^t\in\DX$ such
	that \eqref{eq:vovk-mixable} holds for all $x \in X$.
\end{definition}

The key result concerning mixability is that it characterises when constant 
regret is achievable.
\begin{theorem}[\cite{Vovk:2001}]\label{thm:vovk}
	If $\lossx : \DX \to \R^X$ is $\eta$-mixable for some $\eta > 0$ then
	for any game of $T$ rounds with finitely many experts $\Theta$
	the aggregating algorithm will guarantee
	\[
		\sum_{t=1}^T \lossx_{x^t}(p^t) 
		\le 
		\sum_{t=1}^T \lossx_{x^t}(p^t_\theta) + \frac{\log |\Theta|}{\eta}.
	\]
\end{theorem}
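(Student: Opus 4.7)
The plan is to exploit the mixability inequality \eqref{eq:vovk-mixable} at each round and then telescope. The key observation is that the exponential-weights update \eqref{eq:aa-updates} makes the normalising constant $Z^t := \sum_{\theta} \exp(-\eta \tlossx^t_\theta)$ behave multiplicatively across rounds, so that the per-round bound on $\lossx_{x^t}(p^t)$ supplied by mixability collapses to a single telescoping sum.

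First I would take the uniform prior $\mu^0(\theta) = 1/|\Theta|$, so that $Z^0 = |\Theta|$ and $\mu^{t-1}(\theta) = \exp(-\eta\tlossx^{t-1}_\theta)/Z^{t-1}$. Apply \eqref{eq:vovk-mixable} with $x = x^t$, using the mixture $\mu^{t-1}$ available at the start of round $t$; this yields
\begin{equation*}
  \eta\,\lossx_{x^t}(p^t) \;\le\; -\log\!\sum_{\theta\in\Theta} \exp(-\eta\,\lossx_{x^t}(p^t_\theta))\,\mu^{t-1}(\theta).
\end{equation*}
Substituting the closed form of $\mu^{t-1}$ and using additivity of the cumulative loss, $\tlossx^t_\theta = \tlossx^{t-1}_\theta + \lossx_{x^t}(p^t_\theta)$, the right-hand side becomes $\log Z^{t-1} - \log Z^t$. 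Summing over $t = 1,\dots,T$ telescopes to
\begin{equation*}
  \eta\sum_{t=1}^T \lossx_{x^t}(p^t) \;\le\; \log Z^0 - \log Z^T \;=\; \log|\Theta| - \log Z^T.
\end{equation*}

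Finally I would lower-bound $Z^T$ by dropping all but the best expert: for any $\theta^* \in \Theta$, $Z^T \ge \exp(-\eta\tlossx^T_{\theta^*})$, so in particular $-\log Z^T \le \eta \min_\theta \tlossx^T_\theta$. Dividing by $\eta$ yields the claimed bound $\sum_t \lossx_{x^t}(p^t) \le \min_\theta \tlossx^T_\theta + \log|\Theta|/\eta$.

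There is no real obstacle here beyond verifying that the mixability condition can indeed be invoked at every round — that is, that a $p^t$ satisfying \eqref{eq:vovk-mixable} exists for every configuration of expert predictions and every mixture $\mu^{t-1}$ — but this is exactly what Definition \ref{def:vovk-mixable} guarantees. The only subtle bookkeeping is alignment of the indices on $\mu$ between \eqref{eq:aa-updates} and \eqref{eq:vovk-mixable}, and ensuring that the initial normaliser corresponds to the uniform prior so that the constant $\log|\Theta|$ emerges cleanly.
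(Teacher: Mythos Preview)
Your argument is correct: it is the classical partition-function telescoping proof of Vovk's bound, and the bookkeeping (including your remark about the $\mu^{t-1}$ vs.\ $\mu^t$ indexing) is handled properly.

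The paper, however, does not prove Theorem~\ref{thm:vovk} directly. It instead establishes the general $\Phi$-mixability bound (Theorem~\ref{thm:main}) via convex-analytic tools and then recovers Theorem~\ref{thm:vovk} as the special case $\Phi=-\tfrac{1}{\eta}H$ with $\pi$ uniform, for which $D_\Phi(\delta_\theta,\pi)=\tfrac{1}{\eta}\log|\Theta|$. In that proof the telescoping step is written as $\sum_t\bigl[\Phi^*(\nabla\Phi(\mu^{t-1}))-\Phi^*(\nabla\Phi(\mu^{t-1})-\assess(x^t))\bigr]=\Phi^*(\nabla\Phi(\mu^0))-\Phi^*(\nabla\Phi(\mu^T))$, and the final step replaces your crude lower bound $Z^T\ge\exp(-\eta\tlossx^T_{\theta^*})$ by the Bregman identity of Lemma~\ref{lem:breg-div} evaluated at $\mu'=\delta_\theta$. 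Under $\Phi=-\tfrac{1}{\eta}H$ one has $\Phi^*(v)=\tfrac{1}{\eta}\log\sum_\theta e^{\eta v(\theta)}$, so the paper's potential differences are literally your $\tfrac{1}{\eta}(\log Z^{t-1}-\log Z^t)$; the two proofs share the same skeleton. What your direct route buys is a short, self-contained argument with no conjugate machinery; what the paper's route buys is that the same computation works verbatim for any entropy $\Phi$, making clear that nothing special about $\log\sum\exp$ is being used beyond translation invariance of a simplex-restricted conjugate.
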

Furthermore, \cite{Vovk:2001} also supplies a converse: that a constant 
regret bound is only achievable for $\eta$-mixable losses. 
Later work by \citet{Erven:2012} has show that mixability of proper
losses (see \S\ref{sub:preliminaries}) can be characterised in terms of the
curvature of the corresponding entropy for $\lossx$, that is, 
in terms of $\Phi^\lossx(p)=\langle p,\lossx(p)\rangle$.

\subsection{Contributions}
\label{sub:contributions}

Our main contribution is a generalisation of the notion of mixability and
a corresponding generalisation of Theorem~\ref{thm:vovk}.
Specifically, for any \emph{entropy} (\ie, convex function on the simplex)
$\Phi : \DT\to\R$ we define $\Phi$-mixability for losses $\lossx : \DX \to
\R^X$ (Definition~\ref{def:Phi-mixable}) and provide two equivalent
characterisations that lend themselves to some novel interpretations.
(Lemma~\ref{lem:mix-bound}).
We use these characterisations to prove the follow key result. 
Denote by
$\delta_\theta\in\Delta_\Theta$ the unit mass on $\theta$:
$\delta_\theta(\theta)=1$, $\delta_\theta(\theta')=0$ for all
$\theta'\ne\theta$.
	Let $D_\Phi$ denote the  Bregman divergence induced by $\Phi$,
	defined formally below in (\ref{eq:bregman-def}).
\begin{theorem}\label{thm:main}
	If $\lossx : \Delta_X \to \R^X$ is $\Phi$-mixable then there is
	a family of strategies parameterised by $\pi\in\DT$ which,
	for any sequence of observations 
	$x^1, \ldots, x^T \in X$ 
	and sequence of expert predictions $p^1_\theta, \ldots , p^T_\theta$, 
	plays a sequence $p^1, \ldots, p^T \in \Delta_X$ such that for all
	$\theta \in \Theta$
	\begin{equation}\label{eq:main}
		\sum_{t=1}^T \lossx_{x^t}(p^t)
		\le
		\sum_{t=1}^T \lossx_{x^t}(p^t_\theta) +
			D_\Phi(\delta_\theta, \pi) .
	\end{equation}
\end{theorem}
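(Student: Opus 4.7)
My approach is to replay Vovk's telescoping argument but in a mirror-descent-style framework. Define the generalised aggregating strategy by $\mu^0 := \pi$ and
\begin{equation*}
	\mu^t := \argmin_{\mu'\in\DT}\; \E{\theta\sim\mu'}{\lossx_{x^t}(p^t_\theta)} + D_\Phi(\mu',\mu^{t-1}),
\end{equation*}
which reduces to the exponential-weights update~\eqref{eq:aa-updates} when $\Phi$ is a scaled negative Shannon entropy. Reading $\Phi$-mixability in the ``minimum form'' provided by Lemma~\ref{lem:mix-bound}, at each round $t$ there is a prediction $p^t\in\DX$ satisfying, for every $x\in X$ and in particular for $x = x^t$,
\begin{equation*}
	\lossx_{x^t}(p^t) \le \E{\theta\sim\mu^t}{\lossx_{x^t}(p^t_\theta)} + D_\Phi(\mu^t,\mu^{t-1}),
\end{equation*}
the right-hand side being the optimal value of the problem defining $\mu^t$.

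The second ingredient is the Bregman three-point identity
\begin{equation*}
	D_\Phi(u,\mu^{t-1}) = D_\Phi(u,\mu^t) + D_\Phi(\mu^t,\mu^{t-1}) + \inner{\nabla\Phi(\mu^{t-1}) - \nabla\Phi(\mu^t),\,\mu^t - u}
\end{equation*}
applied to an arbitrary comparator $u\in\DT$. First-order optimality of $\mu^t$ against the simplex constraint gives $\nabla\Phi(\mu^{t-1}) - \nabla\Phi(\mu^t) = (\lossx_{x^t}(p^t_\theta))_\theta - \lambda_t\ones$ for some Lagrange multiplier $\lambda_t$; since $\inner{\ones,\mu^t - u} = 0$ for any $u\in\DT$, the inner product in the three-point identity collapses to $\E{\theta\sim\mu^t}{\lossx_{x^t}(p^t_\theta)} - \E{\theta\sim u}{\lossx_{x^t}(p^t_\theta)}$. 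Combining this with the mixability bound cancels the $D_\Phi(\mu^t,\mu^{t-1})$ terms and yields the per-round inequality
\begin{equation*}
	\lossx_{x^t}(p^t) - \E{\theta\sim u}{\lossx_{x^t}(p^t_\theta)} \le D_\Phi(u,\mu^{t-1}) - D_\Phi(u,\mu^t).
\end{equation*}

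Summing over $t=1,\ldots,T$ telescopes the right-hand side to $D_\Phi(u,\pi) - D_\Phi(u,\mu^T) \le D_\Phi(u,\pi)$ (Bregman divergences are non-negative), and specialising $u := \delta_\theta$ collapses $\E{\theta'\sim\delta_\theta}{\lossx_{x^t}(p^t_{\theta'})}$ to $\lossx_{x^t}(p^t_\theta)$, delivering~\eqref{eq:main}. The step I expect to require the most care is the first-order / three-point manipulation near the boundary of $\DT$: the Lagrangian argument tacitly assumes $\mu^t\in\ri(\DT)$ and $\Phi$ differentiable there, while $\delta_\theta$ is itself a boundary point, so $D_\Phi(\delta_\theta,\pi)$ must be interpreted via a continuous extension---exactly the kind of special behaviour of potentials on the simplex that the introduction flags as crucial and which fails for potentials on more general convex domains.
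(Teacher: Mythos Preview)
Your argument is correct and shares the same skeleton as the paper's: define the generalised aggregating update, invoke $\Phi$-mixability round by round via Lemma~\ref{lem:mix-bound}, telescope, and specialise the comparator to $\delta_\theta$. The difference is in how the telescoping is packaged. The paper works in the dual: it uses the potential form~\eqref{eq:mix-potential} so that summing the per-round bounds collapses immediately to $\Phi^*(\D\Phi(\pi)) - \Phi^*(\D\Phi(\mu^T))$ via Lemma~\ref{lem:updates}, and only then converts back to a Bregman expression through Lemma~\ref{lem:breg-div}. You stay in the primal throughout, using the Bregman form~\eqref{eq:mix-divergence} together with the three-point identity so that the telescoping quantity is $D_\Phi(u,\mu^{t-1}) - D_\Phi(u,\mu^t)$. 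These are dual views of the same computation: your KKT relation $\D\Phi(\mu^{t-1}) - \D\Phi(\mu^t) = \assess(x^t) - \lambda_t\ones$ is exactly Lemma~\ref{lem:updates}, and the vanishing of $\inner{\ones,\mu^t - u}$ that kills the multiplier is the primal face of the translation invariance (Lemma~\ref{lem:invariant}) the paper leans on. One small advantage of the paper's ordering is that $\delta_\theta$ appears only in the very last line, so one never needs $D_\Phi(\delta_\theta,\mu^t)$ for intermediate $t$; your route requires these to make sense for every $t$, though as you observe this is harmless once $\mu^t\in\ri(\DT)$.
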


The standard notion of mixability is recovered when 
$\Phi = -\frac{1}{\eta}H$ for $\eta > 0$ and $H$ the Shannon entropy on $\DT$.
In this case, Theorem~\ref{thm:vovk} is obtained as a corollary
for $\pi$ the uniform distribution
over $\Theta$.
A compelling feature of our result is that it gives a natural interpretation
of the constant $D_\Phi(\delta_\theta, \pi)$ in the regret bound: if
$\pi$ is the initial guess as to which expert is best before the game starts,
the ``price'' that is paid by the player is exactly how far (as measured by
$D_\Phi$) the initial guess was from the distribution that places all its
mass on the best expert.

In addition, an algorithm analogous to the Aggregating Algorithm is naturally
recovered to witness the above bound during the construction of the proof;
see (\ref{eq:gen-aa-update}).
Like the usual Aggregating Algorithm, our ``generalised Aggregating Algorithm''
updates its mixtures according to the past performances of the
experts.
However, our algorithm is most easily understood as doing so via updates to 
the \emph{duals} of the distributions induced by $\Phi$.

\subsection{Related Work}

The starting point for mixability and the aggregating algorithm is the work
of \citet{Vovk:1995,Vovk:1990}.
The general setting of prediction with expert advice is summarised in
\cite[Chapters 2 and 3]{Cesa-Bianchi:2006}. There one can find a range of
 results that study diffferent aggregation schemes and different
 assumptions on the losses (exp-concave, mixable).
 Variants of the aggregating algorithm have been studied for classically
 mixable losses, with a tradeoff between tightness of the bound (in a
 constant factor) and the computational complexity \citep{Kivinen1999}.
Weakly mixable losses are a generalisation of mixable losses. They have
been studied in \cite{Kalnishkan2008} where it is  shown there exists a
variant of the aggregating algorithm that achives regret $C\sqrt{T}$ for
some constant $C$.
\citet[in \S2.2]{Vovk:2001} makes the observation that his Aggregating
Algorithm 
reduces to Bayesian mixtures in the case of the log loss game. See also the
discussion in \citep[page 330]{Cesa-Bianchi:2006} relating certain
aggregation schemes to Bayesian updating.

The general form of updating we propose is similar to that considered by 
\citet{Kivinen:1997} who consider finding a vector $w$ minimising
\(
	d(w,s) + \eta L(y_t, w\cdot x_t)
\)
where $s$ is some starting vector, $(x_t, y_t)$ is the instance/label
observation at round $t$ and $L$ is a loss.  The key difference between
their formulation and ours is that our loss term is (in their notation)
$w\cdot L(y_t, x_t)$ -- \ie, the linear combination of the losses of the
$x_t$ at $y_t$ and not the loss of their inner product.

Online methods of density estimation for exponential families are discussed in
\cite[\S3]{Azoury:2001} where they compare the online and offline updates of
the same sequence and make heavy use of the relationship between the KL
divergence between members of an exponential family and an associated Bregman 
divergence between the parameters of those members.

The analysis of mirror descent by \cite{Beck:2003} shows that it achieves 
constant regret when the entropic regulariser is used. 
However, they do not consider whether similar results extend to other entropies
defined on the simplex.


\section{Generalised Mixability} 

This work was motivated by the observation that the original mixability
definition \eqref{eq:vovk-mixable} looks very closely related to the 
log-sum-exp function, which is known to be the simplex-restricted conjugate 
of Shannon entropy.
We wondered whether the proof that mixability implies constant regret
was due to unique properties of Shannon entropy or whether alternative
notions of entropy could lead to similar results.
We found that the key step of the original mixability proof (that allows
the sum of bounds to telescope) holds for any convex function defined
on the simplex. 
This is because the conjugates of such functions have a translation
invariant property that allows the original telescoping series argument
to go through in the general case.
By re-expressing the original proof using only the tools of convex 
analysis we were able to naturally derive the corresponding update
algorithm and express the constant term in the bound as a Bregman
divergence.

\subsection{Preliminaries}
\label{sub:preliminaries}

We begin by introducing some basic concepts and notation from convex analysis.
Terms not defined here can be found in a reference such as 
\citep{Hiriart-Urruty:2001}.
A convex function $\Phi : \DT \to \R$ is called an \emph{entropy}
if it is proper, convex, and lower semi-continuous.
The \emph{Bregman divergence} associated with a suitably differentiable entropy 
$\Phi$ is given by
\begin{equation}
	\label{eq:bregman-def}
	D_\Phi(\mu, \mu')
	= 
	\Phi(\mu) - \Phi(\mu') - \inner{\D\Phi(\mu'), \mu - \mu'}
\end{equation}
for all $\mu \in \DT$ and $\mu' \in \ri(\DT)$, the relative interior of
$\DT$.
The \emph{convex conjugate} of $\Phi : \DT \to \R$ is defined to be
$\Phi^*(v) := 
\sup_{\mu\in\dom\Phi} \inner{\mu,v} - \Phi(\mu)=
\sup_{\mu\in\DT} \inner{\mu,v} - \Phi(\mu)$ where 
$v \in \DT^*$, \ie, the dual space to $\DT$.
One could also write the supremum over $\R^{\Theta}$ by the convention of
setting $\Phi(\mu) = +\infty$ for $\mu \notin\DT$.
For differentiable $\Phi$, it is known that the supremum defining $\Phi^*$ is 
attained at $\mu = \D\Phi^*(v)$ \citep{Hiriart-Urruty:2001}.
That is,
\begin{equation}\label{eq:conjugate-sup}
	\Phi^*(v) = \inner{\D\Phi^*(v),v} - \Phi(\D\Phi^*(v)).
\end{equation}
A similar result holds for $\Phi$ by applying this result to $\Phi^*$ and
using $\Phi = (\Phi^*)^*$.
We will make use of this result to establish the following inequality 
connecting a Bregman divergence $D_\Phi$ with its conjugate.
\begin{lemma}\label{lem:breg-div}
	For all $\mu\in\DT$ and $v \in \DT^*$ we have
	\[
		\Phi^*(\D\Phi(\mu)) - \Phi^*(\D\Phi(\mu) - v)
		=
		\inf_{\mu'\in\DT} \inner{\mu',v} + D_\Phi(\mu',\mu).
	\]
\end{lemma}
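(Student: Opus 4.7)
The plan is a direct computation: expand the Bregman divergence inside the infimum, separate out the terms that are constant in $\mu'$, and recognise the remaining minimisation as (the negative of) a convex conjugate.

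First I would substitute the definition \eqref{eq:bregman-def} of $D_\Phi(\mu',\mu)$ into the right-hand side to obtain
\begin{equation*}
    \inf_{\mu'\in\DT} \inner{\mu',v} + \Phi(\mu') - \Phi(\mu) - \inner{\D\Phi(\mu), \mu'-\mu}.
\end{equation*}
Since $\Phi(\mu)$ and $\inner{\D\Phi(\mu),\mu}$ do not depend on $\mu'$, I pull them outside the infimum to get
\begin{equation*}
    -\Phi(\mu) + \inner{\D\Phi(\mu),\mu} + \inf_{\mu'\in\DT}\Bigl(\Phi(\mu') - \inner{\mu', \D\Phi(\mu) - v}\Bigr).
\end{equation*}
The infimum is precisely $-\sup_{\mu'\in\DT}\bigl(\inner{\mu',\D\Phi(\mu)-v} - \Phi(\mu')\bigr) = -\Phi^{*}(\D\Phi(\mu)-v)$, by the definition of the convex conjugate on $\DT$.

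It remains to identify the constant $-\Phi(\mu) + \inner{\D\Phi(\mu),\mu}$ with $\Phi^{*}(\D\Phi(\mu))$. This is exactly the Fenchel--Young equality at the point $v=\D\Phi(\mu)$: applying \eqref{eq:conjugate-sup} to $\Phi^{*}$ (so that the supremum defining $\Phi=\Phi^{**}$ is attained at $v = \D\Phi(\mu)$, equivalently $\mu = \D\Phi^{*}(\D\Phi(\mu))$) yields
\begin{equation*}
    \Phi^{*}(\D\Phi(\mu)) = \inner{\mu, \D\Phi(\mu)} - \Phi(\mu).
\end{equation*}
Combining the two displays gives the claimed identity.

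The only subtlety worth flagging is the domain of the supremum: since $\Phi$ is proper with $\dom\Phi \subseteq \DT$, extending $\Phi$ by $+\infty$ outside $\DT$ means the supremum over $\DT$ and over $\R^\Theta$ coincide, so $-\sup_{\mu'\in\DT}(\cdots) = -\Phi^{*}(\D\Phi(\mu)-v)$ without caveat. Attainment of the infimum on the left is not needed for the equality to hold; differentiability of $\Phi$ on $\ri(\DT)$ is only used to ensure that $\D\Phi(\mu)$ is well-defined and that the Fenchel--Young equality holds at that point. Beyond that, the argument is purely algebraic, so I do not expect any real obstacle.
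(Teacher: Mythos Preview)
Your proof is correct and is essentially the paper's argument run in reverse: the paper starts from the left-hand side, expands $\Phi^*(\D\Phi(\mu))$ via \eqref{eq:conjugate-sup} and $\Phi^*(\D\Phi(\mu)-v)$ via the definition of the conjugate, and then rearranges the difference into the Bregman form, whereas you begin with the infimum, expand $D_\Phi$, and collapse the pieces back into the two conjugate terms. The ingredients (definition of $\Phi^*$, Fenchel--Young at $\D\Phi(\mu)$, and the definition of $D_\Phi$) are identical.
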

\begin{proof}
	By definition 
	$\Phi^*(\D\Phi(\mu) - v) 
	= \sup_{\mu'\in\DT} \inner{\mu',\D\Phi(\mu)-v} - \Phi(\mu')$ 
	and using \eqref{eq:conjugate-sup} expands $\Phi^*(\D\Phi(\mu))$
	to $\Phi^*(\D\Phi(\mu)) = \inner{\mu,\D\Phi(\mu) - \Phi(\mu)}$.
	Subtracting the former from the latter gives
	\[
		\inner{\mu,\D\Phi(\mu)} - \Phi(\mu)
		-\left[
			\sup_{\mu'\in\DT} \inner{\mu',\D\Phi(\mu)-v} - \Phi(\mu')
		\right]
	\]
	which, when rearranged gives
	\(
		\inf_{\mu'\in\DT}
			\Phi(\mu') - \Phi(\mu)
			- \inner{\D\Phi(\mu), \mu'-\mu}
			+ \inner{\mu',v}
	\)
	which then gives the result.
\end{proof}

We will also make use of a property of conjugates of entropies called 
\emph{translation invariance} \citep{Othman:2011}.
This notion is central to what are called convex and coherent risk functions
in mathematical finance \citep{Follmer:2004}.
In the following result and throughout, we use $\ones \in \R^\Theta$ for the 
point such that $\ones_\theta = 1$ for all $\theta\in\Theta$.

\begin{lemma}\label{lem:invariant}
	If $\Phi : \DT \to \R$ is an entropy then its convex conjugate is
	\emph{translation invariant}, that is, for all $v \in\DT^*$
	and $\alpha \in \R$ we have
	\(
		\Phi^*(v + \alpha\ones) = \Phi^*(v) + \alpha
	\)
	and its gradient satisfies 
	$\D\Phi^*(v + \alpha\ones) = \D\Phi^*(v)$.
\end{lemma}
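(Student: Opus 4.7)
The plan is to exploit the defining property of the simplex: every $\mu \in \DT$ satisfies $\inner{\mu, \ones} = \sum_\theta \mu(\theta) = 1$. This is the single fact that makes the translation invariance work, and it is precisely why the result is special to entropies defined on the simplex (as emphasised at the end of the introduction).

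For the first claim, I would start from the definition
\[
    \Phi^*(v + \alpha\ones)
    = \sup_{\mu \in \DT} \inner{\mu, v + \alpha\ones} - \Phi(\mu),
\]
split the inner product into $\inner{\mu, v} + \alpha\inner{\mu, \ones}$, and then use $\inner{\mu, \ones} = 1$ to pull the constant $\alpha$ outside the supremum. What remains is exactly $\Phi^*(v) + \alpha$.

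For the gradient claim, the cleanest route is to observe that the supremum defining $\Phi^*(v + \alpha\ones)$ is attained at exactly the same $\mu^\star$ as the supremum defining $\Phi^*(v)$, because the objective has only been shifted by the constant $\alpha$ (uniformly over $\mu \in \DT$, again by $\inner{\mu, \ones} = 1$). Since \eqref{eq:conjugate-sup} identifies the maximiser with $\D\Phi^*$ evaluated at the argument, we conclude $\D\Phi^*(v + \alpha\ones) = \D\Phi^*(v)$.

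There is no real obstacle here; the only thing one must be a little careful about is making sure the supremum is attained (so that $\D\Phi^*$ is well-defined at both points) and that we are allowed to invoke \eqref{eq:conjugate-sup}, both of which follow from the standing assumption that $\Phi$ is a proper lower semi-continuous convex function on $\DT$ with a sufficiently differentiable conjugate. The whole argument is essentially a one-line computation once one notices the role played by $\inner{\mu, \ones} = 1$.
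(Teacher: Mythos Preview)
Your proof is correct and follows essentially the same route as the paper: expand the conjugate definition, split the inner product, and use $\inner{\mu,\ones}=1$ to extract the constant $\alpha$. For the gradient claim the paper simply differentiates both sides of $\Phi^*(v+\alpha\ones)=\Phi^*(v)+\alpha$ with respect to $v$, whereas you argue via invariance of the maximiser and \eqref{eq:conjugate-sup}; both are immediate and amount to the same observation.
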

\begin{proof}
	By definition of the convex conjugate we have
	\begin{align*}
		\Phi^*(v + \alpha\ones) 
		=& \sup_{\mu\in\DT} \inner{\mu,v + \alpha\ones} - \Phi(\mu)\\
		=& \sup_{\mu\in\DT} \inner{\mu,v} - \Phi(\mu) + \alpha\\
		=& \Phi^*(v) + \alpha
	\end{align*}
	since $\inner{\mu,\ones}=1$.
	Taking derivatives of both sides gives the second part of the lemma.
\end{proof}
We will also make use of the readily established fact that for any
convex $\Phi : \DT\to\R$ and all $\eta > 0$ we have 
$(\frac{1}{\eta} \Phi)^*(v) = \frac{1}{\eta}\Phi^*(\eta v)$.

Probably the most well studied example of what we call an entropy is the 
negative of the Shannon entropy\footnote{%
	We write Shannon entropy here as a sum but can also consider 
	the continuous version relative to some reference measure
	$\nu \in\DT$, that is, 
	$H(\mu) = - \int_{\DT} \log(\mu(\theta))\mu(\theta)\,d\nu(\theta)$.
	For simplicitly, we stick to the countable case.
}
$H(\mu) = -\sum_{\theta\in\Theta} \mu(\theta) \log \mu(\theta)$
which is known to be concave, proper, and upper semicontinuous and thus 
$\Phi = -H$ is an entropy.
When we look at the form of the original definition of mixability, we
observe that it is closely related to the conjugate of $(-H)$:  
\begin{equation}\label{eq:shannon-conjugate}
	(-H)^*(v) = \log \sum_{\theta\in\Theta} \exp(v(\theta))
\end{equation}
which is sometimes called the \emph{log-sum-exp} or 
\emph{partition function}.
This observation is what motivated this work and drives our generalisation
to other entropies.

Entropies are known to be closely related to the Bayes risk of what are
called proper losses or proper scoring rules~\citep{Dawid:2007,Gneiting:2007}.
Specifically, if a loss $\losst : \DT \to \R^\Theta$ is used to assign a
penalty $\losst_\theta(\mu)$ to a prediction $\mu$ upon outcome $\theta$
it is said to be \emph{proper} if its expected value under $\theta\sim\mu$
is minimsed by predicting $\mu$. 
That is, for all $\mu, \mu' \in \DT$ we have
\[
	\E{\theta\sim\mu}{\losst_\theta(\mu')}
	=
	\inner{\mu,\losst(\mu')} 
	\ge
	\inner{\mu,\losst(\mu)}
	=: -\Phi^\losst(\mu)
\]
where $-\Phi^\losst$ is the \emph{Bayes risk} of $\losst$ and is necessarily
concave~\citep{Erven:2012}, thus making $\Phi^\losst : \DT\to\R$ convex
and thus an entropy.
The correspondence also goes the other way: given any convex function 
$\Phi : \DT\to\R$ we can construct a unique proper loss.
The following representation can be traced back to ~\citet{Savage:1971} but
is expressed here using conjugacy.

\begin{lemma}\label{lem:proper-loss}
	If $\Phi:\DT\to\R$ is a differentiable entropy then the loss
	$\losst^\Phi :\DT\to\R$ defined by
	\begin{equation}\label{eq:proper-loss}
		\losst^\Phi(\mu)
		:=
		\Phi^*(\D\Phi(\mu))\ones - \D\Phi(\mu)
	\end{equation}
	is proper.
\end{lemma}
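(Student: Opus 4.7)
The plan is to verify the defining inequality of properness, namely that for all $\mu, \mu' \in \DT$,
\[
\inner{\mu, \losst^\Phi(\mu')} \;\geq\; \inner{\mu, \losst^\Phi(\mu)},
\]
by expanding both sides using \eqref{eq:proper-loss} and appealing to the Fenchel--Young inequality.

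First I would compute $\inner{\mu, \losst^\Phi(\mu')}$ directly. Since $\inner{\mu, \ones} = 1$ for $\mu \in \DT$, the definition \eqref{eq:proper-loss} yields
\[
\inner{\mu, \losst^\Phi(\mu')} \;=\; \Phi^*(\D\Phi(\mu')) - \inner{\mu, \D\Phi(\mu')}.
\]
Next, I would apply the Fenchel--Young inequality in the form $\Phi^*(v) \geq \inner{\mu,v} - \Phi(\mu)$, specialised to $v = \D\Phi(\mu')$, to obtain the lower bound
\[
\inner{\mu, \losst^\Phi(\mu')} \;\geq\; -\Phi(\mu).
\]
Finally, I would check that this lower bound is attained when $\mu' = \mu$: by \eqref{eq:conjugate-sup}, $\Phi^*(\D\Phi(\mu)) = \inner{\mu, \D\Phi(\mu)} - \Phi(\mu)$, so that
\[
\inner{\mu, \losst^\Phi(\mu)} \;=\; \inner{\mu, \D\Phi(\mu)} - \Phi(\mu) - \inner{\mu, \D\Phi(\mu)} \;=\; -\Phi(\mu).
\]
Combining the two displays gives $\inner{\mu, \losst^\Phi(\mu')} \geq \inner{\mu, \losst^\Phi(\mu)}$, which is precisely properness.

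There is no real obstacle here; the only point that requires a moment of care is the use of $\inner{\mu, \ones} = 1$, which is why the construction restricts to $\mu \in \DT$ and why translation invariance (Lemma~\ref{lem:invariant}) is the right structural feature ensuring the $\Phi^*(\D\Phi(\mu))\ones$ correction produces a well-defined, proper loss rather than merely shifting the scoring rule. Differentiability of $\Phi$ is used implicitly in invoking \eqref{eq:conjugate-sup}, since it guarantees the sup defining $\Phi^*$ is attained at $\mu = \D\Phi^*(\D\Phi(\mu))$.
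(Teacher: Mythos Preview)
Your proof is correct and runs on the same convex-analytic engine as the paper's, but with a slightly different emphasis. You invoke the Fenchel--Young inequality $\Phi^*(v) \ge \inner{\mu,v} - \Phi(\mu)$ directly at $v = \D\Phi(\mu')$ to obtain the lower bound $\inner{\mu,\losst^\Phi(\mu')} \ge -\Phi(\mu)$, and then check equality at $\mu'=\mu$ via \eqref{eq:conjugate-sup}. The paper instead expands the difference $\inner{\mu,\losst^\Phi(\mu')} - \inner{\mu,\losst^\Phi(\mu)}$ using \eqref{eq:conjugate-sup} and identifies it \emph{exactly} as the Bregman divergence $D_\Phi(\mu,\mu')$, so propriety follows from nonnegativity of $D_\Phi$. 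The two arguments are equivalent---nonnegativity of $D_\Phi(\mu,\mu')$ is just Fenchel--Young specialised to $v=\D\Phi(\mu')$---but the paper's version buys the additional fact that the regret of $\losst^\Phi$ is precisely $D_\Phi(\mu,\mu')$, which ties the proper loss back to the paper's central object and motivates why $\losst^\Phi$ is the ``right'' loss to associate with $\Phi$.
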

\begin{proof}
  By eq.~\eqref{eq:conjugate-sup} we have 
  $\Phi^*(\D\Phi(\mu)) = \inner{\mu,\D\Phi(\mu)} - \Phi(\mu)$, giving us
  \begin{align*}
    \inner{\mu,\losst^\Phi(\mu')} - \inner{\mu,\losst^\Phi(\mu)}
    &= \Bigl(\inner{\mu',\D\Phi(\mu')} - \Phi(\mu') 
	- \inner{\mu,\D\Phi(\mu')}\Bigr)\\ &\quad
    - \Bigl(\inner{\mu,\D\Phi(\mu)} - \Phi(\mu) - \inner{\mu,\D\Phi(\mu)}\Bigr)\\
    &= D_\Phi(\mu,\mu'),
  \end{align*}
  from which propriety follows.
\end{proof}

It is straight-forward to show that the proper loss associated with the
negative Shannon entropy $\Phi = -H$ is the log loss, that is,
$\losst^{-H}(\mu) := -\left(\log \mu(\theta)\right)_{\theta\in\Theta}$.

\subsection{$\Phi$-Mixability}
\label{sub:generalising_mixability}

For a loss $\lossx : \Delta_X \to \R^X$ define the \emph{assessment} 
$\assess : X \to \R^\Theta$ to be the loss of each model/expert $p_\theta$
on observation $x$, \ie,
\(
	\assess_\theta(x) := \lossx_x(p_\theta).
\)

\begin{definition}\label{def:Phi-mixable}
	Suppose $\Phi$ is a differentiable entropy on $\Delta_\Theta$.
	A loss $\lossx : \Delta_X \to \R^X$ is \emph{$\Phi$-mixable} 
	if for all $\{ p_\theta \}_\theta$ and all $\mu \in \DT$ there is a 
	$p\in \Delta_X$ such that for all $x \in X$,
	\begin{equation}\label{eq:mixable} 
		\lossx_x(p) \le
		-\Phi^*(-\losst^{\Phi}(\mu) - \assess(x)).
	\end{equation}
\end{definition}
We can readily show that this definition reduces to the standard mixability
definition when $\Phi = \frac{1}{\eta}(-H)$ since, in this case,
\begin{equation}\label{eq:eta-conjugate}
	\Phi^*(v) = \frac{1}{\eta}\log\sum_\theta \exp(\eta v(\theta))
\end{equation}
by \eqref{eq:shannon-conjugate} and the fact that 
$(\frac{1}{\eta}f)^*(x^*) = \frac{1}{\eta} f^*(\eta x^*)$ for any convex
$f$.
As mentioned above, the proper loss corresponding to this choice of $\Phi$ is 
easily seen to be 
$\losst_\theta^\Phi(\mu) = -\frac{1}{\eta} \log(\mu(\theta))$ by
substitution into \eqref{eq:proper-loss}.
Thus, the mixability inequality becomes
$\lossx_x(p) \le 
	-\frac{1}{\eta} \log \sum_\theta \exp(-\eta\assess(x) + \log \mu(\theta))$
which is equivalent to \eqref{eq:vovk-mixable}.

We now show that the above definition is equivalent to one involving the
Bregman divergence for $\Phi$ and also the difference in the ``potential''
$\Phi^*$ evaluated at $\D\Phi(\mu)$ before and after it is updated by
$\alpha(x)$.

\begin{lemma}\label{lem:mix-bound}
	Suppose $\Phi$ is a differentiable entropy on $\Delta_\Theta$.
	Then the $\Phi$-mixability condition~\eqref{eq:mixable} is equivalent to 
	the following:
	\begin{align}
      \label{eq:mix-divergence}
      \lossx_x(p) 
      &\le 
      \inf_{\mu'\in\DT} \inner{\mu',\assess(x)} + D_{\Phi}(\mu',\mu),
      \\
      \label{eq:mix-potential}
      \lossx_x(p)
      &\le
      \;\Phi^*(\D\Phi(\mu)) - \Phi^*(\D\Phi(\mu) - \assess(x)).
	\end{align}
\end{lemma}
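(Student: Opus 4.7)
The plan is to prove both equivalences by relating each expression to \eqref{eq:mix-potential} as a common hub, using only the two preliminary lemmas already established (translation invariance of $\Phi^*$ and the Bregman-divergence identity of Lemma~\ref{lem:breg-div}). No essentially new convex-analytic machinery is needed; everything reduces to unpacking the definition of the proper loss $\losst^\Phi$.

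First I would show that \eqref{eq:mixable} and \eqref{eq:mix-potential} have identical right-hand sides. By the definition in \eqref{eq:proper-loss},
\[
-\losst^\Phi(\mu) - \assess(x) = \bigl(\D\Phi(\mu) - \assess(x)\bigr) - \Phi^*(\D\Phi(\mu))\,\ones,
\]
so by Lemma~\ref{lem:invariant} applied with $\alpha = -\Phi^*(\D\Phi(\mu))$,
\[
\Phi^*\bigl(-\losst^\Phi(\mu) - \assess(x)\bigr) = \Phi^*\bigl(\D\Phi(\mu) - \assess(x)\bigr) - \Phi^*(\D\Phi(\mu)).
\]
Negating both sides gives precisely the right-hand side of \eqref{eq:mix-potential}, establishing \eqref{eq:mixable} $\Leftrightarrow$ \eqref{eq:mix-potential}.

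Second, I would invoke Lemma~\ref{lem:breg-div} with $v = \assess(x) \in \DT^*$ to rewrite
\[
\Phi^*(\D\Phi(\mu)) - \Phi^*(\D\Phi(\mu) - \assess(x)) = \inf_{\mu'\in\DT} \inner{\mu',\assess(x)} + D_\Phi(\mu',\mu),
\]
which matches the right-hand side of \eqref{eq:mix-divergence}, giving \eqref{eq:mix-potential} $\Leftrightarrow$ \eqref{eq:mix-divergence}.

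The only real point to be careful about is sign-bookkeeping when peeling the scalar $\Phi^*(\D\Phi(\mu))\ones$ out of the argument of $\Phi^*$ via translation invariance; once that is done, the rest is a direct substitution. There is no genuine obstacle here, since the work has been front-loaded into Lemmas~\ref{lem:breg-div} and~\ref{lem:invariant}; the content of this lemma is really just to observe that these two facts, together with the representation \eqref{eq:proper-loss} of the canonical proper loss, are exactly what is required to reveal the three equivalent guises of $\Phi$-mixability.
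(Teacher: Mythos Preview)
Your proposal is correct and follows essentially the same route as the paper's proof: expand $\losst^\Phi(\mu)$ via \eqref{eq:proper-loss}, use translation invariance (Lemma~\ref{lem:invariant}) to peel off the scalar $\Phi^*(\D\Phi(\mu))\ones$ and obtain \eqref{eq:mix-potential}, then apply Lemma~\ref{lem:breg-div} with $v=\assess(x)$ to reach \eqref{eq:mix-divergence}. The only cosmetic difference is that you explicitly frame the argument as two equivalences through the hub \eqref{eq:mix-potential}, whereas the paper derives \eqref{eq:mix-potential} first and then \eqref{eq:mix-divergence} from it; the content is identical.
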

\begin{proof}
	Expanding the definition of $\losst^\Phi(\mu)$ makes the right-hand
	side of~\eqref{eq:mixable} equal to
	\[
		-\Phi^*(-\Phi^*(\D\Phi(\mu))\ones + \D\Phi(\mu) - \assess(x))
		= -\Phi^*(\D\Phi(\mu) - \assess(x)) 
			+ \Phi^*(\D\Phi(\mu)) 
	\]
	since $\Phi^*$ is translation invariant by Lemma~\ref{lem:invariant}.  This
	gives~\eqref{eq:mix-potential}.  Further applying Lemma~\ref{lem:breg-div}
	with $v = \assess(x)$ gives~\eqref{eq:mix-divergence}.
\end{proof}

\section{The Generalised Aggregating Algorithm}
\label{sub:gaa}

In this section we prove our main result (Theorem~\ref{thm:main}) and examine 
the ``generalised Aggregating Algorithm'' that witnesses the bound.
The updating strategy we use is the one that 
repeatedly returns the minimiser of the right-hand side of 
\eqref{eq:mix-divergence}.
\begin{definition}\label{def:updates}
	On round $t$, after observing $x^t \in X$, the 
	\emph{generalised aggregating algorithm} (GAA) updates the mixture 
	$\mu^{t-1} \in \DT$ by setting
	\begin{equation}
		\label{eq:gen-aa-update}
		\mu^t 
		:= 
		\argmin_{\mu \in \DT}
			\inner{\mu,\assess(x^t)} + D_\Phi(\mu, \mu^{t-1}).
	\end{equation}
\end{definition}

The next lemma shows that this updating process simply aggregates the 
assessments in the dual space $\DT^*$ with $\D\Phi(\pi)$ as the starting
point.

\begin{lemma}\label{lem:updates}
	The GAA updates $\mu^t$ satisfy
	\(
		\D\Phi(\mu^t) 
		=
		\D\Phi(\mu^{t-1}) - \assess(x^t)
	\)
	for all $t$ and so
	\begin{equation}\label{eq:updates}
		\D\Phi(\mu^T) = \D\Phi(\pi) - \sum_{t=1}^T \alpha(x^t).
	\end{equation}
\end{lemma}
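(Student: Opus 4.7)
The plan is to derive the one-step identity directly from the first-order optimality conditions of the convex program defining $\mu^t$, and then iterate. Since $\mu \mapsto \inner{\mu,\assess(x^t)} + D_\Phi(\mu,\mu^{t-1})$ is convex, and the minimisation is constrained to the simplex $\DT$ (an equality constraint $\inner{\mu,\ones}=1$ together with nonnegativity, which is active only on the boundary and can be ignored when $\Phi$ is a standard steep entropy), Lagrangian stationarity gives
\[
    \assess(x^t) + \D\Phi(\mu^t) - \D\Phi(\mu^{t-1}) - \lambda \ones = 0
\]
for some multiplier $\lambda \in \R$.

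The main obstacle is exactly this $\lambda\ones$ term: because $\DT$ has codimension one, the gradient $\D\Phi$ of an entropy on the simplex is only determined up to an additive multiple of $\ones$. The key observation that resolves this is Lemma~\ref{lem:invariant}: $\D\Phi^*$ is invariant under translations by $\ones$, so any choice of representative for $\D\Phi(\mu^t)$ recovers the same primal point $\mu^t$. In particular, we may choose the representative for which $\lambda = 0$, giving $\D\Phi(\mu^t) = \D\Phi(\mu^{t-1}) - \assess(x^t)$.

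A conceptually cleaner route, which I would actually write out, is to invoke Lemma~\ref{lem:breg-div} with $v = \assess(x^t)$ and $\mu = \mu^{t-1}$: the infimum on the right equals $\Phi^*(\D\Phi(\mu^{t-1})) - \Phi^*(\D\Phi(\mu^{t-1}) - \assess(x^t))$, and inspecting the proof of that lemma shows that the minimiser of the infimum coincides with the maximiser of the supremum defining $\Phi^*(\D\Phi(\mu^{t-1}) - \assess(x^t))$. By \eqref{eq:conjugate-sup}, that maximiser is $\D\Phi^*(\D\Phi(\mu^{t-1}) - \assess(x^t))$, so $\mu^t = \D\Phi^*(\D\Phi(\mu^{t-1}) - \assess(x^t))$; applying $\D\Phi$ to both sides (using $\D\Phi \circ \D\Phi^* = \mathrm{id}$ modulo $\ones$, as ensured by Lemma~\ref{lem:invariant}) yields the desired identity.

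Finally, with the one-step update in hand, the second claim follows by a trivial induction on $t$: starting from $\mu^0 = \pi$, the telescoping sum gives $\D\Phi(\mu^T) = \D\Phi(\pi) - \sum_{t=1}^T \assess(x^t)$. Throughout, equality of dual vectors should be read modulo $\ones$; this is consistent and harmless thanks to translation invariance.
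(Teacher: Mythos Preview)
Your proposal is correct; the Lagrangian argument you sketch first is exactly the paper's proof, down to absorbing the multiplier $\lambda\ones$ via the translation invariance of Lemma~\ref{lem:invariant}. Your alternative route through Lemma~\ref{lem:breg-div} and \eqref{eq:conjugate-sup} is a mild reframing that lands on the same intermediate identity $\mu^t = \D\Phi^*(\D\Phi(\mu^{t-1}) - \assess(x^t))$ before applying $\D\Phi$, so there is no substantive difference.
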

\begin{proof}
By considering the Lagrangian
\(
	\Lag(\mu,a)
	= 
	\inner{\mu,\assess(x^t)} + D_\Phi(\mu, \mu^{t-1}) + a(\inner{\mu,\ones}-1)
\)
and setting its derivative to zero we see that the minimising $\mu^t$ must 
satisfy
\(
	\D\Phi(\mu^t) 
	=
	\D\Phi(\mu^{t-1}) - \assess(x^t) - a^t\ones
\)
where the $a^t \in R$ is the dual variable at step $t$.
For convex $\Phi$, the functions $\D\Phi^*$ and $\D\Phi$ are inverses
\citep{Hiriart-Urruty:2001} so 
$\mu^t = \D\Phi^*(\D\Phi(\mu^{t-1}) - \assess(x^t) - a^t\ones)
       = \D\Phi^*(\D\Phi(\mu^{t-1}) - \assess(x^t))$
by the translation invariance of $\Phi^*$ (Lemma~\ref{lem:invariant}).
This means the constants $a^t$ are arbitrary and can be ignored.
Thus, the mixture updates satisfy the relation in the lemma and summing
over $t=1,\ldots,T$ gives \eqref{eq:updates}.
\end{proof}

To see how the updates just described are indeed a generalisation of those 
used by the original aggregating algorithm, we can substitute 
$\Phi = -\frac{1}{\eta} H$ and $\pi = \frac{1}{|\Theta|}$ 
in \eqref{eq:gen-aa-update}.
Because $H$ is maximal for uniform distributions we must have 
$\D\Phi(\pi) = -\frac{1}{\eta}\D H(\pi) = 0$ and so
$\mu^T = \D\Phi^*(-\sum_{t=1}^T \assess(x^t))$.
However, by \eqref{eq:mix-divergence} we see that 
\[
	\left[\D\Phi^*(v)\right]_\theta 
	= 
	\frac{e^{\eta v(\theta)}}{\sum_{\theta'} e^{\eta v(\theta')}}
\]
and then substituting $v(\theta) = -\sum_t \assess(x^t)$ gives the
update equation in \eqref{eq:aa-updates}.

\subsection{The proof of Theorem~\ref{thm:main}}\label{sub:proof}

Armed with the representations of $\Phi$-mixability in 
Lemma~\ref{lem:mix-bound} and the form of the updates in 
Lemma~\ref{lem:updates}, we now turn to the proof of 
our main result.

\begin{proof}[of Theorem~\ref{thm:main}]
By assumption, $\lossx$ is $\Phi$-mixable and so, for the updates $\mu^t$
just defined we have that there exists a $p^t \in \DX$ such that
\(
	\lossx_{x^t}(p^t)
	\le
	-\Phi^*(-\losst^\Phi(\mu^{t-1}) - \assess(x^t))
\)
for all $x^t \in X$.
Expressing these bounds using \eqref{eq:mix-potential} from 
Lemma~\ref{lem:mix-bound} and summing these over $t=1,\ldots,T$ gives
\begin{align}
	\sum_{t=1}^T \lossx_{x^t}(p^t)
	\le&
	\sum_{t=1}^T \Phi^*(\D\Phi(\mu^{t-1})) 
		- \Phi^*(\D\Phi(\mu^{t-1}) - \assess(x^t))
	\notag
	\\
	=&
	\Phi^*(\D\Phi(\mu^0)) - \Phi^*(\D\Phi(\mu^T))
	\label{eq:telescopes}\\
	=&
	\inf_{\mu'\in\DT} 
		\inner{\mu',\sum_{t=1}^T \assess(x^T)}
		+ D_\Phi(\mu',\pi)
	\label{eq:mu-T} \\
	\le&
	\inner{\mu',\sum_{t=1}^T \assess(x^t)} + D_\Phi(\mu',\pi)
	\qquad\text{ for all } \mu' \in \DT
	\label{eq:converse-needs-this}
\end{align}
Line \eqref{eq:telescopes} above is because 
$\D\Phi(\mu^t) = \D\Phi(\mu^{t-1}) - \assess(x^t)$ by Lemma~\ref{lem:updates} 
and the series telescopes.
Line ~\eqref{eq:mu-T} is obtained by applying \eqref{eq:gen-aa-update} from
Lemma~\ref{lem:updates} and Lemma~\ref{lem:breg-div}.
Setting $\mu' = \delta_\theta$ and noting 
$\inner{\delta_\theta,\assess(x^t)} = \lossx_{x^t}(p_\theta)$
gives the required result.
\end{proof}

Note that the proof above gives us something even stronger ---
eq.~\eqref{eq:converse-needs-this} states that the GAA satisfies the stronger
condition that eq.~\eqref{eq:main} hold for all $\mu\in\Delta_\Theta$, in
addition to all $\delta_\theta$, where the loss is an expected loss under
$\mu$.  In particular, choosing $T=1$ in eq.~\eqref{eq:mu-T}, we have
\begin{equation*}
  \lossx_{x^1}(p^1)
  \le
  \inf_{\mu\in\Delta_\Theta} \; \inner{\mu,\assess(x^1)} + D_\Phi(\mu,\pi),
\end{equation*}
from which we can conclude that $\lossx$ is actually $\Phi$-mixable from
Lemma~\ref{lem:mix-bound}.  Hence, an algorithm exists which guarantees the
bound in eq.~\eqref{eq:converse-needs-this} if and only if the loss $\lossx$ is
$\Phi$-mixable.

Finally, we briefly note some similarities between the Generalised Aggregating
Algorithm and the literature on automated market makers for prediction markets.
The now-standard framework of~\cite{abernethy2013efficient} defines the cost of
a purchase of some bundle of securities as the difference in a convex potential
function.  Formally, for some convex $C:\R^n\to\R$, a purchase of bundle
$r\in\R^n$ given current market state $q$ is given by $C(q+r) - C(q)$.  The
instantaneous prices in the market at state $q$ are therefore $p=\nabla C(q)$.
As the prices correspond to probabilities in their framework, it must be the
case that $R := C^*$ satisfies $\mathrm{dom}(R) = \Delta_n$.  From this we can
conclude as we have done above that $C$ is translation invariant, and thus one
can restate the cost of the bundle $r$ as $R^*(\nabla R(p)+r) - R^*(\nabla
R(p))$.

We now are in a position to draw an anology with our GAA.  The formulation of
$\Phi$-mixability in eq.~\eqref{eq:mix-potential} says that the loss upon
observing $x$ must be bounded above by $\Phi^*(\nabla \Phi(\mu))
- \Phi^*(\nabla \Phi(\mu)+\alpha(x))$, which is exactly the negative of the
expression above, where $R=\Phi$, $p=\mu$, and $r=\alpha(x)$.  Thus,
$\Phi$-mixability is saying the loss must be at least as good for the algorithm
than in the market making setting, and hence it is not surprising that the loss
bounds are the same in both settings; see~\cite{abernethy2013efficient} for
more details.

\section{Future Work} 
\label{sec:discussion_and_future_work}
\label{sub:future_work}

Our exploration into a generalized notion of mixability opens more doors than
it closes.  In the following, we briefly outline several open directions.

\subsection*{Relation to original mixability result}

The proof of our main result, Theorem~\ref{thm:main}, shows that in essence, an
algorithm can guarantee constant regret, expressed in terms of
a $\Phi$-divergence between a starting point and the best expert, if and only
if the underlying loss $\lossx$ is $\Phi$-mixable.  The original mixability
result of~\citet{Vovk:2001} states that one achieves a constant regret of $\log
|\Theta|/\eta$ if and only if $\lossx$ is, in our terminology,
$(-\eta^{-1}H)$-mixable.  But of course for any $\Phi$ which is bounded on
$\Delta_\Theta$, the penalty $D_\Phi(\delta_\theta,\pi)$ is also bounded, and
hence it would seem that for all bounded $\Phi:\Delta_\Theta\to\R$, a loss
$\lossx$ is mixable in the sense of Vovk if and only if it is
$\eta^{-1}\Phi$-mixable for some $\eta>0$.

\subsection*{Relation to curvatures of $\lossx$ and $\Phi$}

A recent result of~\citet{Erven:2012} shows that the mixability constant $\eta$
from the original Definition~\ref{def:vovk-mixable} can be calculated as the
ratio of curvatures between the Bayes risk of the loss $\lossx$ and Shannon
entropy.  It would stand to reason therefore that for any $\Phi$, the
$\Phi$-mixability constant $\eta$ for a loss $\lossx$, defined as the largest
$\eta$ such that $\lossx$ is $\eta^{-1}\Phi$-mixable, would be similarly
defined as the ratio to $-\Phi$ instead of $H$.

\subsection*{Optimal regret bound}

The curvature discussion above addresses the question of finding, given
a $\Phi$, the largest $\eta$ such that $\lossx$ is $\eta^{-1}\Phi$-mixable.
Note that the larger $\eta$ is, the smaller the corresponding regret term
$\eta^{-1}D_{\Phi}(\delta_\theta,\pi)$ is.  Hence, for fixed $\Phi$, this
$\Phi$-mixability constant yields the tightest bound.  The question remains,
however, what is the tightest bound one can acheive across \emph{all} choices
of $\Phi$?  Again in reference to Vovk, it seems that the choice of $\Phi$ may
not matter, at least as long as $D_\Phi(\delta_\theta,\pi)$ is a constant
independent of $\theta$.  It would be clarifying to directly assert this claim
or find a counter-example.



\bibliographystyle{plainnat}
\bibliography{genmix}

\end{document}